\documentclass[letterpaper, 10 pt, conference]{ieeeconf}  
\usepackage{graphicx}
\usepackage{adjustbox}
\usepackage{multirow}
\usepackage{amsmath}
\usepackage{amssymb}
\newtheorem{proposition}{Proposition}
\newtheorem{theorem}{Theorem}
\newtheorem{corollary}{Corollary}
\newtheorem{definition}{Definition}
\usepackage[ruled,vlined,linesnumbered]{algorithm2e}
\usepackage{cite}
\usepackage{subcaption}
\usepackage{xcolor}
\usepackage{makecell}
\usepackage{amssymb} 
\usepackage{pifont}  
\usepackage[normalem]{ulem}
\usepackage{xcolor}
\usepackage{hyperref}

\hypersetup{
    colorlinks=true,
    urlcolor=magenta
}

\newcounter{assumption}

\newcommand{\cmark}{{\color{green}\checkmark}}
\newcommand{\xmark}{{\color{red}\ding{55}}}

\IEEEoverridecommandlockouts                              

\title{\LARGE \bf
    PC-Diffuser: Path-Consistent Capsule CBF Safety Filtering for Diffusion-Based Trajectory Planner
}

\author{
    Eugene Ku$^{1}$ and Yiwei Lyu$^{1}$ 
    \thanks{$^{1}$Eugene Ku and Yiwei Lyu are with the Department of Computer Science and Engineering, Texas A\&M University, College Station, TX, USA.
        {\tt\small yjean234@tamu.edu; yiweilyu@tamu.edu}}%
}

\begin{document}

\maketitle
\thispagestyle{empty}
\pagestyle{empty}


\begin{abstract}
Autonomous driving in complex traffic requires planners that generalize beyond hand-crafted rules, motivating data-driven approaches that learn behavior from expert demonstrations. Diffusion-based trajectory planners have recently shown strong closed-loop performance by iteratively denoising a full-horizon plan, but they remain difficult to certify and can fail catastrophically in rare or out-of-distribution scenarios. To address this challenge, we present PC-Diffuser, a safety augmentation framework that embeds a certifiable, path-consistent barrier-function structure directly into the denoising loop of diffusion planning. The key idea is to make safety an intrinsic part of trajectory generation rather than a post-hoc fix: we enforce forward invariance along the rollout while preserving the diffusion model’s intended path geometry. Specifically, PC-Diffuser (i) evaluates collision risk using a capsule-distance barrier function that better reflects vehicle geometry and reduces unnecessary conservativeness, (ii) converts denoised waypoints into dynamically feasible motion under a kinematic bicycle model, and (iii) applies a path-consistent safety filter that eliminates residual constraint violations without geometric distortion, so the corrected plan remains close to the learned distribution. By injecting these safety-consistent corrections at every denoising step and feeding the refined trajectory back into the diffusion process, PC-Diffuser enables iterative, context-aware safeguarding instead of post-hoc repair. On the nuPlan closed-loop benchmark, PC-Diffuser reduces collision rate from 100\% to 10.29\% on the all-collision challenge set (a collision subset of Val14), outperforming representative safety augmentation baselines, while showing improvement in the composite driving score on both Val14 and Test14-hard relative to the base diffusion planner, demonstrating its ability to safeguard without compromising driving performance. Code and videos are available at: \url{https://eugene29.github.io/PC-Diffuser_website/}.

\end{abstract}


\section{INTRODUCTION}




Motion planning for autonomous driving must handle a wide spectrum of traffic interactions that are difficult to fully anticipate with hand-crafted rules. This has motivated data-driven planners that learn driving behavior from expert demonstrations, enabling generalization across diverse real-world scenarios~\cite{caesar2021nuplan}. Among these approaches, diffusion-based planners have recently emerged as a strong paradigm for long-horizon planning~\cite{janner2022diffuser, ho2020ddpm}. By starting from a noisy trajectory and iteratively denoising it into a coherent plan, diffusion planners refine the entire horizon jointly, which helps avoid the compounding errors of autoregressive action prediction and yields globally consistent behavior over long rollouts~\cite{janner2022diffuser, zheng2025diffusionplanner, tan2025flowplanner}. 

Despite these advantages, diffusion planners offer limited reliability guarantees. Their learned score function encourages trajectories that resemble the training distribution, but provides no formal mechanism to prevent unsafe outcomes when the scene departs from the data manifold. In rare or out-of-distribution traffic configurations, the generated plan can be physically plausible yet unsafe, including collision-inducing behaviors. This gap between strong average-case performance and the absence of worst-case guarantees is a central barrier to safety-critical deployment.


\begin{figure}[h]
    \centering
\includegraphics[width=\linewidth,trim=.7cm 3.5cm 6.cm 3.5cm,clip]{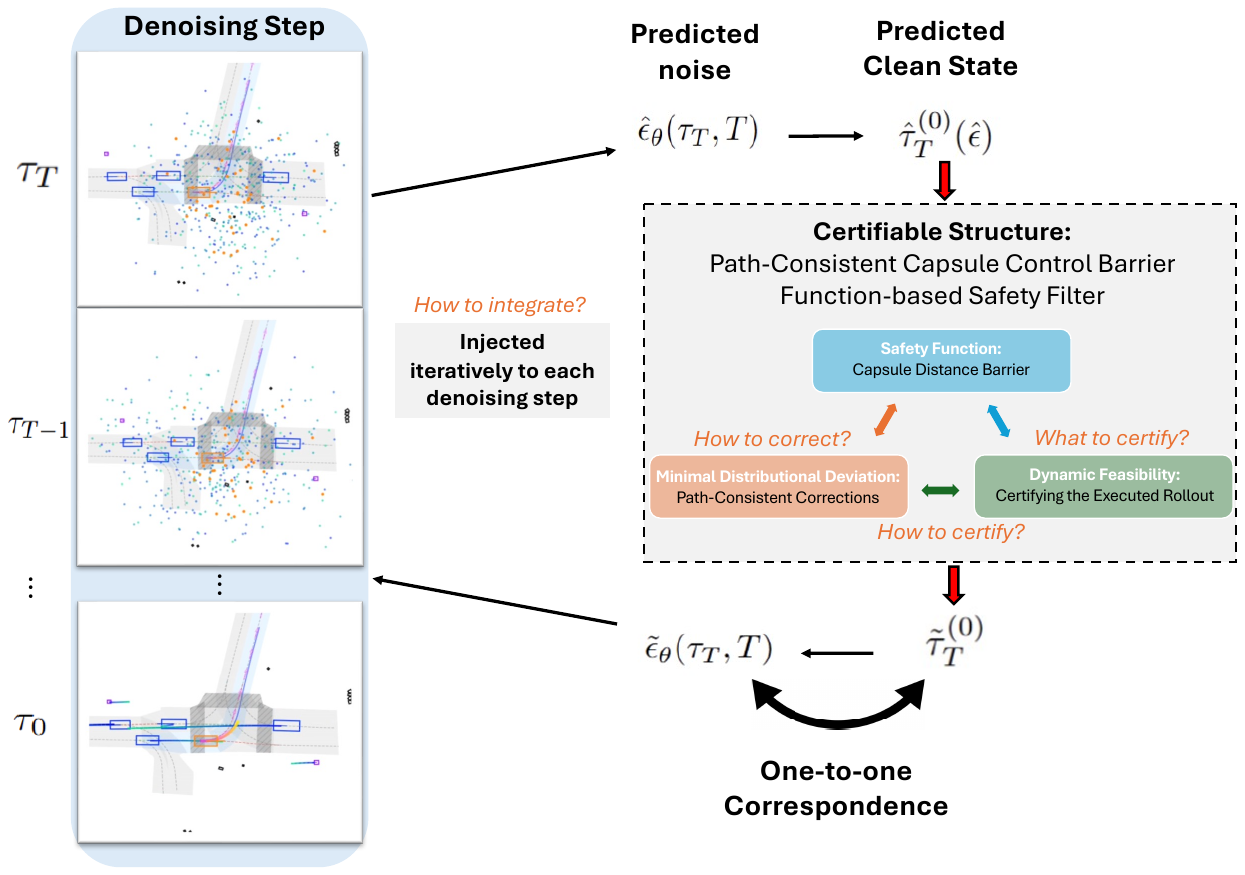}
    \caption{Overview of the proposed PC-Diffuser safety augmentation framework.}
    \label{fig:PC-Diffuser}
\end{figure}

A natural response is to combine diffusion planning with formal safety enforcement mechanisms such as control barrier functions (CBFs). However, existing integrations often miss at least one of three requirements that matter in practice.  First, safety should be certified on the trajectory that will be executed under the vehicle dynamics (i.e., the rollout induced by tracking/control), not merely on intermediate diffusion iterates or raw waypoint sequences. Second, that certification must be dynamics-consistent: enforcing purely geometric constraints on waypoints, or using an inconsistent motion model, can produce “safe” plans that are not physically realizable. Third, the correction should be minimally invasive, preserving the planner’s intended path as much as possible to avoid large distributional shifts that degrade driving quality, inducing overly conservative or even unsafe behavior. Achieving all three simultaneously is nontrivial: enforcing CBF constraints aggressively can distort the learned behavior, while enforcing them weakly (or at the wrong time index) can fail to certify the executed trajectory.

To bridge this gap, we propose \textbf{PC-Diffuser}, a framework that injects a certifiable, path-consistent structure into the denoising loop so that safety is enforced during generation rather than repaired afterward. In doing so, we aim to answer three fundamental questions when it comes to improving safety for diffusion-based planning: (1) \emph{Which is the right object to certify?} (i.e., the executed rollout induced by tracking/control rather than intermediate diffusion iterates or waypoint sequences); (2) \emph{How can we make certification dynamics-consistent?} so that safety guarantees align with what the vehicle can physically execute; and (3) \emph{How should the safety correction change the plan?} such that violations are resolved with minimal disruption to the planner’s intended path and without introducing large distributional shifts that degrade driving quality.


The \textbf{main contributions} of this paper are: (i) We introduce a certifiable, path-consistent barrier-function structure that jointly supports rollout-time safety, dynamic feasibility, and minimal deviation from the learned diffusion behavior, enabling safety enforcement that is both physically meaningful and minimally invasive. (ii) We integrate this structure directly into the denoising loop, allowing iterative, context-aware safeguarding during generation and thereby reducing reliance on one-shot post-processing that can be brittle in rare or out-of-distribution scenarios. (iii) We evaluate on the nuPlan closed-loop benchmark~\cite{caesar2021nuplan} and show that PC-Diffuser eliminates the majority of catastrophic failures on the all-collision challenge set, driving the collision rate down from 100\% to 10.29\%, outperforming popular baseline methods. This is done without compromising driving performance, demonstrated by an improvement in nuplan's composite score on standard Val14 and Test14-hard splits relative to the vanilla baseline diffusion planner.

\section{Related Works}

%





A growing body of work has sought to improve the safety of diffusion-based planners, spanning guidance at the score level, constrained denoising, and post-processing safety layers.

\noindent\textbf{Score-guidance for safety.} DiffusionPlanner~\cite{zheng2025diffusionplanner} steers the denoising process toward safer behaviors by backpropagating gradients from a hand-crafted reward classifier into the diffusion score. While effective in biasing samples, gradient-based guidance does not provide a formal safety certificate and can introduce artifacts that are dynamically inconsistent, since the guided trajectory is not necessarily generated through a dynamics-respecting rollout. Additionally, hand-crafted reward classifiers often do not provide meaningful gradients and can disrupt the model's learned driving behavior. 

\noindent \textbf{Constrained denoising via CBFs.}  SafeDiffuser~\cite{xiao2023safediffuser} and SafeFlow~\cite{dai2025safeflow} introduce {Constrained denoising} methods through embedding Control Barrier Function constraints directly into the denoising process, enforcing forward invariance across the diffusion index rather than over rollout time. While this guarantees that consecutive denoising iterates remain in a CBF-safe set, the diffusion index is not rollout time: the final executed trajectory carries no formal safety certificate, and dynamic feasibility is left unaddressed.


\noindent \textbf{Optimization-based safety layers.} A complementary line of work enforces safety by filtering controls or trajectories through constrained optimization. CBF-QP~\cite{ames2017cbfqp} applies a reactive safety filter at each control step, which can ignore the planner’s long-horizon intent and induce conservative, myopic deviations from the learned behavior. MPC-CBF~\cite{zeng2021mpccbf} incorporates lookahead by imposing CBF constraints within a receding-horizon formulation, but the resulting problem is often non-convex and challenging to solve reliably in real time, and the corrected solutions can drift away from the learned distribution.



\noindent\textbf{Validation and fallback. } PACS~\cite{romer2025pacs} and RAIL~\cite{jung2025rail} validate a learned plan via reachability analysis and switch to a backup policy when collision is predicted. This preserves the original plan when already safe, but the fallback behavior, often emergency braking, can itself be hazardous or overly disruptive in dense, interactive traffic.


In contrast to the above approaches, our goal is to couple diffusion generation with certifiable safety in a way that targets the executed rollout. Specifically, we enforce CBF forward invariance over rollout time instead of diffusion time, design the correction to be minimally disruptive so the plan remains close to the intended trajectory and learned distribution, and jointly search for a safe trajectory with the diffusion planner rather than relying on a rule-defined fallback policy.





\section{Preliminaries}
\subsection{Vehicle Dynamics}
We model the ego vehicle with a kinematic bicycle model with state $\mathbf{x} = (x, y, \theta, \delta, v) \in \mathbb{R}^5$ (position, heading, steering angle, speed) and control $\mathbf{u} = (\dot\delta, a)$ (steering rate, acceleration). The dynamics are $\dot{\mathbf{x}} = f(\mathbf{x}) + g(\mathbf{x})\,\mathbf{u}$ and control affine in $(\mathbf{x},\mathbf{u})$, with
\begin{equation}
    f(\mathbf{x}) = \begin{bmatrix} v\cos\theta \\ v\sin\theta \\ \frac{v\tan\delta}{L} \\ 0 \\ 0 \end{bmatrix}, \quad
    g(\mathbf{x}) = \begin{bmatrix} 0 & 0 \\ 0 & 0 \\ 0 & 0 \\ 1 & 0 \\ 0 & 1 \end{bmatrix},
    \label{eq:bicycle}
\end{equation}
where $L$ is the wheelbase.

\subsection{Control Barrier Functions}
For a control-affine system, a continuously differentiable function $h: \mathbb{R}^n \to \mathbb{R}$ is a \emph{Control Barrier Function} (CBF)~\cite{ames2019cbf} for the safe set $\mathcal{S} = \{\mathbf{x} : h(\mathbf{x}) \geq 0\}$ if there exists an extended class-$\mathcal{K}_\infty$ function $\alpha$ such that
\begin{equation}
    \sup_{u \in \mathcal{U}}\!\left[\nabla h(\mathbf{x})^\top\!\left(f(\mathbf{x}) + g(\mathbf{x})u\right)\right] \geq -\alpha\!\left(h(\mathbf{x})\right),
    \label{eq:cbf}
\end{equation}
for all $\mathbf{x} \in \mathcal{S}$.
Any Lipschitz controller satisfying~\eqref{eq:cbf} renders $\mathcal{S}$ forward invariant~\cite{ames2019cbf}.
The minimally invasive safe controller is obtained via the \emph{CBF-QP}:
\begin{equation}
\begin{aligned}
    u^* = \arg\min_{u \in \mathcal{U}}\;& \|u - u_{\mathrm{nom}}\|^2 \\
    \mathrm{s.t.}\quad& \nabla h(\mathbf{x})^\top\!\left(f(\mathbf{x}) + g(\mathbf{x})\,u\right) \geq -\alpha\!\left(h(\mathbf{x})\right).
\end{aligned}
\label{eq:cbf_qp}
\end{equation}

\subsection{Diffusion-Based Planning}
We adopt the denoising diffusion probabilistic model (DDPM)~\cite{ho2020ddpm} for trajectory planning and generate an ego-state trajectory over a rollout horizon $K$. Let $\boldsymbol{\tau} = (\mathbf{x}_1,\ldots,\mathbf{x}_K)\in\mathbb{R}^{K\times 4}$ ($x, y, \cos\theta, \sin\theta$) denote the trajectory, where each $\mathbf{x}_k$ is the ego state at rollout time $k$. Diffusion planning generates $\boldsymbol{\tau}$ by running a learned reverse-time denoising process for $T$ steps, starting from isotropic Gaussian noise $\boldsymbol{\tau}_T \sim \mathcal{N}(\mathbf{0},\mathbf{I})$.

At each diffusion step $t$, the denoising network $\epsilon_\theta(\boldsymbol{\tau}_t,t)$ predicts the noise component in the current noisy sample $\boldsymbol{\tau}_t$ which can be used to form an estimate of the underlying clean trajectory:
\begin{equation}
    \hat{\boldsymbol{\tau}}_0^{(t)} = \frac{1}{\sqrt{\bar{\alpha}_t}}
    \left(\boldsymbol{\tau}_t - \sqrt{1-\bar{\alpha}_t}\;\epsilon_\theta(\boldsymbol{\tau}_t,t)\right),
    \label{eq:clean_estimate}
\end{equation}
where $\bar{\alpha}_t$ denotes the cumulative noise schedule. The next iterate $\boldsymbol{\tau}_{t-1}$ is then sampled from the DDPM reverse transition:
\begin{equation}
    \boldsymbol{\tau}_{t-1} = \sqrt{\bar{\alpha}_{t-1}}\;\hat{\boldsymbol{\tau}}_0^{(t)}
    + \sqrt{1-\bar{\alpha}_{t-1}}\;\hat{\boldsymbol{\epsilon}}^{(t)} + \sigma_t \mathbf{z},
    \label{eq:ddpm_step}
\end{equation}
where $\hat{\boldsymbol{\epsilon}}^{(t)} = \epsilon_\theta(\boldsymbol{\tau}_t,t)$, $\mathbf{z}\sim\mathcal{N}(\mathbf{0},\mathbf{I})$, and $\sigma_t$ controls the injected sampling noise. Repeating~\eqref{eq:ddpm_step} from $t=T$ down to $t=0$ yields a final denoised trajectory, which serves as the planner's output. Throughout the paper, we use $k$ to index rollout time, $t$ to index diffusion time, and $j$ to index agents.


\section{Methodology}




Our framework is guided by two motivating questions:
\begin{enumerate}
    \item \emph{What} certifiable structure should be incorporated into a diffusion planner?
    \item \emph{How} should the certifiable structure be integrated into the diffusion planner?
\end{enumerate}

A key design choice underlying both questions is \emph{what object we certify}. 
To address Q1, we identify three properties a certifiable structure must satisfy and introduce a mechanism for each: a capsule-distance control barrier function for \emph{safety}, a path-tracking controller to map waypoints to dynamically feasible controls for \emph{dynamic feasibility}, and a path-consistent correction for \emph{minimal distributional deviation}. 
To address Q2, we integrate this structure into \emph{every} denoising step by correcting the predicted clean trajectory estimate and re-injecting it into the diffusion process, enabling the planner to co-adapt to safety corrections rather than applying a one-shot post-hoc filter. We detail each component below.


\subsection{Safety: Capsule Distance Barrier Function}

We achieve the first property, \emph{safety}, using the control barrier function (CBF) framework (Section~III-B), which enforces forward invariance of a collision-free set through inequality constraints on the time derivative of a barrier function $h$.
A common choice defines $h$ using Euclidean distance between vehicle center points, but this inflates the effective collision boundary and can be overly conservative in tight geometries (e.g., intersections and narrow lanes).
Instead, we represent each vehicle by its longitudinal axis (a line segment connecting the centers of the front and rear ends) and define the barrier using the \emph{capsule distance}, i.e., the minimum distance between the two segments (Fig.~\ref{fig:capsule_distance}).

\begin{figure}[h]
    \centering
    \includegraphics[width=.7\columnwidth]{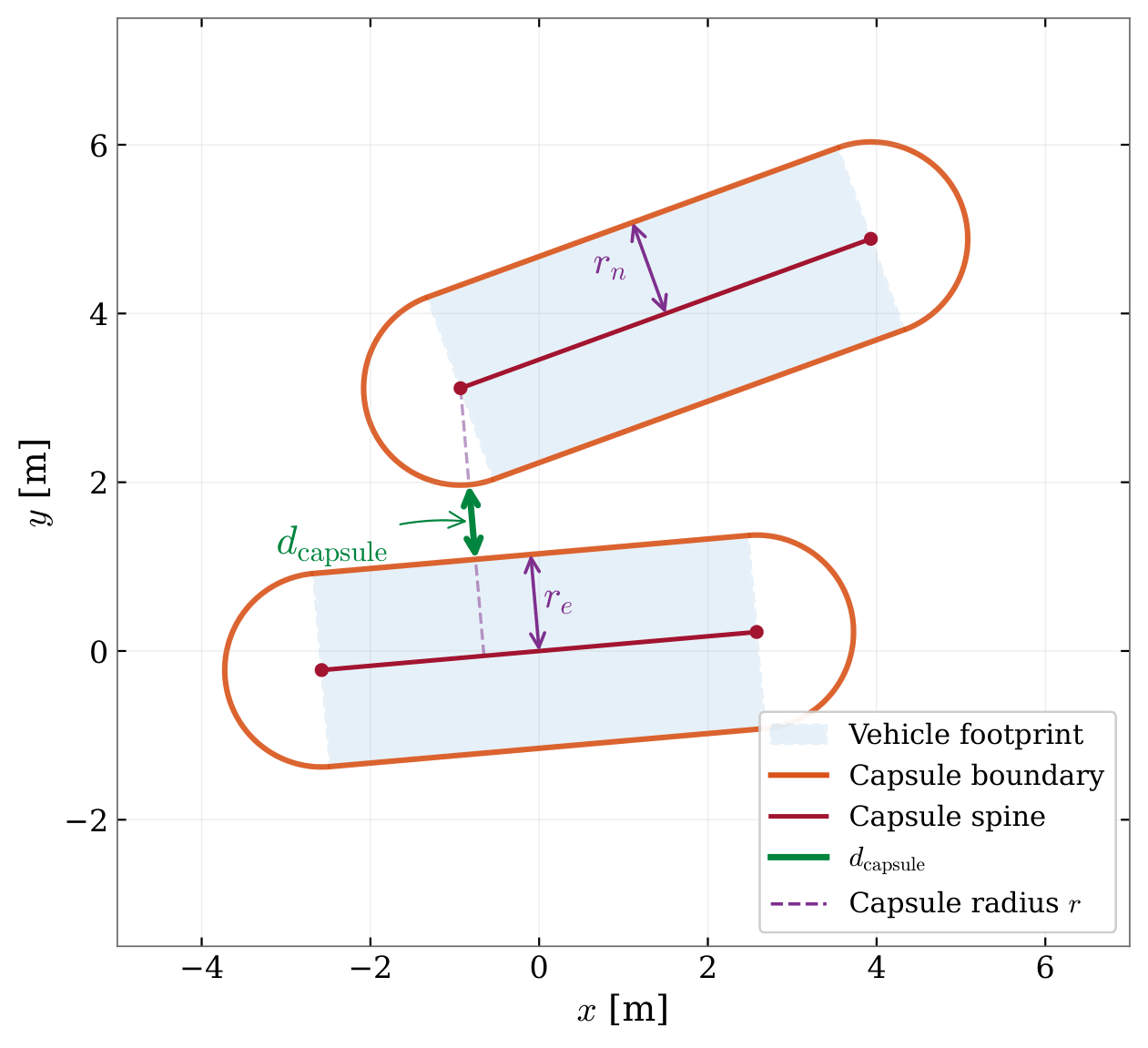}
    \caption{Capsule distance diagram. Each vehicle is represented by its longitudinal axis (line segment). Then the capsule distance is the minimum distance between the two segments minus $r_n + r_e$, the sum of half widths of the two vehicles.}
    \label{fig:capsule_distance}
\end{figure}

Specifically, we represent each vehicle's longitudinal axis as a line segment connecting its rear-end center $P$ to its front-end center $Q$. Given two such segments parameterized by $S_1(s) = P_1 + s(Q_1 - P_1)$ and $S_2(r) = P_2 + r(Q_2 - P_2)$ for $s, r \in [0,1]$, the capsule distance is
\begin{equation}
    d_{\text{cap}}(S_1, S_2) = \min_{s,\, r \,\in\, [0,1]} \| S_1(s) - S_2(r) \|.
    \label{eq:capsule_distance}
\end{equation}
We define the barrier function for ego and neighbor $j$ as
\begin{equation}
    h^j(\mathbf{x}) = d_{\text{cap}}\!\left(S_{\text{ego}}(\mathbf{x}),\; S_j(\mathbf{x}^j)\right) - d_{\text{safe}},
    \label{eq:capsule_cbf}
\end{equation}
where $d_{\text{safe}} > 0$ is a safety margin.

\begin{proposition}[Smoothness of the Capsule Barrier]
\label{prop:smooth}
For any ego state $\mathbf{x}=(x,y,\theta,\delta,v)$ and neighbor state $\mathbf{x}^j$ with $d_{\mathrm{cap}}(S_{\mathrm{ego}}(\mathbf{x}), S_j(\mathbf{x}^j))>0$, the capsule barrier
$h^j(\mathbf{x}) = d_{\mathrm{cap}}(S_{\mathrm{ego}}(\mathbf{x}), S_j(\mathbf{x}^j)) - d_{\mathrm{safe}}$
is continuously differentiable with respect to~$\mathbf{x}$ whenever the closest-point pair attaining $d_{\mathrm{cap}}$ is unique.
\end{proposition}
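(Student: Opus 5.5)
The plan is to write $h^j$ as a composition of smooth maps with the segment-distance function, and to prove the latter is $C^1$ through a Danskin-type envelope argument.

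First, the map $\mathbf{x}\mapsto(P_{\mathrm{ego}},Q_{\mathrm{ego}})$ is smooth. Let $\ell_f,\ell_r$ be the fixed offsets of the front and rear end centers from the reference point $(x,y)$. Then
\[
P_{\mathrm{ego}}=(x,y)-\ell_r(\cos\theta,\sin\theta),\qquad
Q_{\mathrm{ego}}=(x,y)+\ell_f(\cos\theta,\sin\theta).
\]
Both are $C^\infty$ in $\mathbf{x}$ and independent of $\delta$ and $v$. The neighbor segment $S_j$ is held fixed. It therefore suffices to show that $D(P_1,Q_1)=\min_{(s,r)\in[0,1]^2}\|S_1(s)-S_2(r)\|$ is $C^1$ in $(P_1,Q_1)$ near any configuration where $D>0$ and the minimizer is unique. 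The chain rule then gives the claim, since subtracting the constant $d_{\mathrm{safe}}$ does not affect differentiability.

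Next, I will remove the nonsmooth square root. Define
\[
\phi(P_1,Q_1;s,r)=\|P_1+s(Q_1-P_1)-S_2(r)\|^2 .
\]
This function is polynomial, hence $C^\infty$, jointly in all arguments. Let $D^2=\min_{[0,1]^2}\phi$, a minimum over a compact set that does not depend on the parameters. By Danskin's theorem, $D^2$ is directionally differentiable, with directional derivative $\min_{(s,r)\in M}\nabla_{(P_1,Q_1)}\phi\cdot d$, where $M$ is the set of minimizers. When $M=\{(s^*,r^*)\}$ is a singleton, this derivative is linear in $d$, so $D^2$ is differentiable with gradient $\nabla_{(P_1,Q_1)}\phi(s^*,r^*)$. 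Explicitly, the gradient is $2(1-s^*)w$ with respect to $P_1$ and $2s^*w$ with respect to $Q_1$, where $w=S_1(s^*)-S_2(r^*)$.

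For continuity of the gradient, I will use Berge's maximum theorem. The minimizer correspondence is upper hemicontinuous, so wherever it is single-valued the selection $(s^*,r^*)$ is continuous there. The gradient formula is continuous in $(P_1,Q_1,s^*,r^*)$, so $\nabla D^2$ is continuous on a neighborhood in which uniqueness persists. Finally, $D=\sqrt{D^2}$ with $D^2>0$, and $\sqrt{\cdot}$ is smooth on $(0,\infty)$. This yields $\nabla D=w/\|w\|$ contracted with the Jacobian of $S_1(s^*)$. That vector is well defined exactly because $d_{\mathrm{cap}}>0$, which is where the positivity hypothesis enters.

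The main obstacle is the continuity step. Uniqueness at a single point does not automatically persist to a neighborhood. In the parallel-overlapping configuration, nearby configurations can have a continuum of minimizers, and "continuously differentiable whenever unique" must be read as continuity of the gradient on the (open, relative) set of unique-minimizer configurations. I would handle this in two parts. First, I would note that the set where the minimizer is unique and $D>0$ is where Danskin gives a gradient, and Berge gives continuity of that gradient restricted to this set. Second, for a genuine open-neighborhood statement, I would argue that the non-unique case for two segments occurs only when they are parallel with overlapping projections, a closed condition. Its complement is open, so uniqueness holds on a neighborhood of any unique point off that set.
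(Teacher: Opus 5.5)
Your argument takes the same route as the paper's proof: Danskin's theorem on the squared distance $\min_{[0,1]^2}\phi$, the gradient read off at the unique minimizer, and the square root handled via $d_{\mathrm{cap}}>0$, which gives $\hat n^{\top}\partial S_1/\partial\mathbf{x}$. It is more complete than the paper's proof. The paper only shows differentiability at each point and never argues that the gradient is continuous. Your Berge step (upper hemicontinuity of the argmin, hence continuity of $(s^*,r^*)$ on the unique-minimizer set) supplies exactly that missing piece.

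One claim in your last paragraph is wrong. The non-uniqueness set is the set of parallel axes whose projections overlap in an interval of positive length, and it is not closed. Its closure adds the \emph{touching} configurations. An example is parallel, laterally offset axes with the ego's rear-end center exactly abreast of the neighbor's front-end center.

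\begin{itemize}
\item At a touching configuration the closest pair is unique (endpoint to endpoint), so $h^j$ is differentiable there.
\item An arbitrarily small longitudinal shift produces a continuum of minimizers.
\item At those shifted configurations $h^j$ is not differentiable in $\theta$. Under rotation, the normal velocity of the ego axis varies linearly along the overlap. Danskin's directional derivative is a minimum over $s\in[s_a,s_b]$, so it has different slopes for $d\theta>0$ and $d\theta<0$.
\end{itemize}

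Hence $h^j$ is not $C^1$ on any neighborhood of a touching configuration, and the open-neighborhood reading of the proposition cannot be proved there. If you let ``overlapping'' include touching, the set is closed and your argument is correct, but its open complement then leaves out these unique-minimizer points.

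What can actually be proved is one of two statements. The first is your first reading: $h^j$ is differentiable at every unique-minimizer point with $d_{\mathrm{cap}}>0$, and its gradient is continuous relative to that set. The second is genuine $C^1$ on the open set where the axes are non-parallel, or parallel with disjoint projections, intersected with $\{d_{\mathrm{cap}}>0\}$.
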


\begin{proof}
Define the squared capsule distance
\begin{align}
  D(\mathbf{x}) \;&=\; \min_{(s,r)\in[0,1]^2}
  \bigl\|S_1(s,\mathbf{x})-S_2(r,\mathbf{x}^j)\bigr\|^2 \nonumber \\
  &\triangleq\; \min_{(s,r)\in[0,1]^2} \phi(\mathbf{x},s,r).
\end{align}

For each fixed $(s,r)$, $\phi$ is $C^\infty$ in~$\mathbf{x}$, since $S_1$ depends on~$\mathbf{x}$ through $\cos\theta$ and $\sin\theta$.
The constraint set $[0,1]^2$ is compact, so the minimum is attained.
By Danskin's theorem~\cite{danskin1966}, when the minimizer $(s^*\!,r^*)$ is unique,
\begin{align}
  \nabla_{\!\mathbf{x}} D &=\; \nabla_{\!\mathbf{x}} \phi(\mathbf{x},s^*\!,r^*) \\
  \;&=\; 2\bigl(S_1(s^*\!,\mathbf{x})-S_2(r^*\!,\mathbf{x}^j)\bigr)^{\!\top}
    \frac{\partial S_1}{\partial \mathbf{x}}\!(s^*\!;\mathbf{x}).
\end{align}
Uniqueness holds whenever the segments are not parallel, which is the generic case.
The chain rule gives $\nabla_{\!\mathbf{x}} d_{\mathrm{cap}} = \hat{n}^{\top}\! \frac{\partial S_1}{\partial\mathbf{x}}$, where $\hat{n}$ is the unit vector from the closest point on~$S_2$ to the closest point on~$S_1$.
\end{proof}

This ensures that $\dot{h}^j$ is well-defined along the rollout dynamics, so we can enforce CBF-based safety constraints once we specify which control input we are allowed to adjust in Section~\ref{sec:pc_cbf}.
\subsection{Dynamic Feasibility: Certifying the Executed Rollout}

Diffusion planners produce waypoint trajectories, not the control inputs $(a,\delta)$ needed to certify safety under vehicle dynamics. A natural alternative is to generate action sequences directly~\cite{mizuta2024cobl, wang2025alpamayo} instead of target waypoints, but this can reintroduce compounding errors and training instability~\cite{mizuta2024cobl}, undermining the advantage of trajectory-level diffusion. We instead retain waypoint generation and introduce an explicit \emph{interface} from waypoints to a dynamically feasible rollout.

Given the predicted clean trajectory $\hat{\boldsymbol{\tau}}_0^{(t)} = (\hat{\mathbf{x}}_1, \ldots, \hat{\mathbf{x}}_K)$, we track it sequentially with a linearized LQR controller~\cite{rajamani2011vehicle} to produce a nominal control $\mathbf{u}_{\mathrm{nom},k} = (a_{\mathrm{nom},k},\, \delta_{\mathrm{nom},k})$ at each rollout step $k$. This nominal control respects the kinematic bicycle model~\eqref{eq:bicycle} by construction and induces a dynamically feasible nominal rollout. Importantly, it also provides the control inputs that the CBF-QP~\eqref{eq:cbf_qp} requires, unifying trajectory-space planning with action-space safety enforcement within a single framework.

\subsection{Minimal Distributional Deviation: Path-Consistent Corrections}

While a generic safety filter may perturb both acceleration and steering, such corrections can alter the planned path geometry and push the vehicle away from the diffusion model’s learned behavior. This can degrade driving quality and may introduce unsafe side effects (e.g., drifting toward adjacent lanes). We therefore require safety corrections to be \emph{path-consistent}: they should primarily adjust \emph{how fast} the vehicle traverses the planned path rather than \emph{where} it goes.

We implement path-consistency by fixing steering to the tracked nominal value $\delta_k=\delta_{\mathrm{nom},k}$ and allowing the safety filter to modify only the longitudinal channel. With steering held fixed, the safety-corrected control is obtained by solving
\begin{equation}
\begin{aligned}
    a_k^* = \arg\min_{a_k \in \mathcal{U}_a}\;& \|a_k - a_{\mathrm{nom},k}\|^2 \\
    \mathrm{s.t.}\quad& \nabla h^j(\mathbf{x}_k)^\top\!\bigl(f(\mathbf{x}_k) + g(\mathbf{x}_k)\,[a_k,\, \delta_{\mathrm{nom},k}]^\top\bigr) \\
    &\qquad \geq -\alpha\!\bigl(h^j(\mathbf{x}_k)\bigr), \quad \forall\, j \in \mathcal{R},
\end{aligned}
\label{eq:pc_cbf_qp}
\end{equation}
where $\mathcal{U}_a$ denotes admissible accelerations and $\mathcal{R}$ is the set of safety-critical agents (defined in Section~\ref{sec:practical}). This restriction preserves the spatial geometry of the planned path by preventing lateral deviations, while still providing sufficient authority to avoid collisions through speed modulation. In the next subsection, we show how this constrained correction can be implemented efficiently using an equivalent velocity-level formulation.

\subsection{Path-Consistent Capsule CBF (PC-CBF) Safety Filter}
\label{sec:pc_cbf}
We now compose the three components introduced above into a unified \emph{PC-CBF safety filter}. PC-CBF uses the capsule barrier to enforce safety on the \emph{executed} rollout induced by the bicycle dynamics, while preserving the planned path geometry by restricting safety corrections primarily to the longitudinal direction. Concretely, PC-CBF takes a planned waypoint trajectory as input and returns a corrected, dynamically feasible rollout by (i) tracking the path to obtain $\delta_{\mathrm{nom},k}$, (ii) projecting the nominal speed onto the CBF-admissible set, and (iii) rolling out the bicycle dynamics with the corrected longitudinal command.


Given a planned trajectory $\hat{\boldsymbol{\tau}}_0^{(t)}$ and predicted neighbor trajectories $\{\hat{\boldsymbol{\tau}}^{(j)}\}_{j \in \mathcal{R}}$, PC-CBF proceeds sequentially over rollout steps $k=1,\ldots,K$. At each rollout step, an LQR tracker produces a nominal control $(a_{\mathrm{nom},k},\delta_{\mathrm{nom},k})$ to follow the remaining waypoints under the bicycle model~\eqref{eq:bicycle}. To preserve path geometry, we fix steering to $\delta_k=\delta_{\mathrm{nom},k}$ and enforce the capsule barrier condition against agents in $\mathcal{R}$ by modulating speed.

Under fixed steering, the capsule barrier time derivative along the bicycle dynamics~\eqref{eq:bicycle} admits the velocity-linear form
\begin{equation}
    \dot{h}^j
    = \underbrace{\frac{\partial h^j}{\partial x}\cos\theta
    + \frac{\partial h^j}{\partial y}\sin\theta
    + \frac{\partial h^j}{\partial \theta}\frac{\tan\delta_{\mathrm{nom},k}}{L}}_{\displaystyle \triangleq\; \frac{\partial h^j}{\partial v}}
    \cdot v,
    \label{eq:hdot_v}
\end{equation}
which yields a velocity-level CBF constraint $\dot{h}^j \geq -\alpha(h^j)$.

\begin{definition}[Fixed-Steering CBF]
\label{def:vel_cbf}
Fix $\delta=\delta_{\mathrm{nom},k}$ and consider the induced rollout dynamics obtained from~\eqref{eq:bicycle} with steering held fixed, where the effective decision variable is the speed $v\ge 0$. Define the safe set
\begin{equation}
\mathcal{C}^j \;=\; \{\mathbf{x} \mid h^j(\mathbf{x}) \ge 0\}.
\end{equation}
We say that the capsule barrier $h^j$ is a \emph{control barrier function} (CBF) for the induced fixed-steering rollout dynamics if $h^j$ is continuously differentiable on $\mathcal{C}^j$ and, for all $\mathbf{x}\in\mathcal{C}^j$, the admissible set
\begin{equation}
K_{\mathrm{cbf}}(\mathbf{x})
=
\Bigl\{v\geq 0 \ \Bigm|\  \dot{h}^j(\mathbf{x}) \ge -\alpha\!\bigl(h^j(\mathbf{x})\bigr)\Bigr\}
\end{equation}
is nonempty.
\end{definition}

\begin{theorem}[Feasibility of Velocity-Level Capsule CBF]
\label{thm:cbf}
Under fixed steering $\delta=\delta_{\mathrm{nom},k}$, for every $\mathbf{x}$ with $h^j(\mathbf{x})\geq 0$, the set
\begin{equation}
K_{\mathrm{cbf}}(\mathbf{x})
=
\Bigl\{v\geq 0 \ \Bigm|\  \frac{\partial h^j}{\partial v}(\mathbf{x})\,v
\geq -\alpha\!\bigl(h^j(\mathbf{x})\bigr)\Bigr\}
\end{equation}
is nonempty.
\end{theorem}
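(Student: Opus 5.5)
The plan is to exhibit an explicit element of $K_{\mathrm{cbf}}(\mathbf{x})$, namely $v=0$, and check that it satisfies the constraint. The constraint in the theorem is linear and homogeneous in $v$ on its left-hand side. This follows from the velocity-linear form~\eqref{eq:hdot_v}: under fixed steering $\delta=\delta_{\mathrm{nom},k}$, every component of the drift $f(\mathbf{x})$ in~\eqref{eq:bicycle} that moves the pose $(x,y,\theta)$ is proportional to $v$.

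Before using~\eqref{eq:hdot_v}, I will check that the coefficient $\frac{\partial h^j}{\partial v}(\mathbf{x})$ is well defined. By Proposition~\ref{prop:smooth}, $h^j$ is continuously differentiable in $\mathbf{x}$ whenever $d_{\mathrm{cap}}>0$ and the closest-point pair is unique. Since $h^j(\mathbf{x})\ge 0$ gives $d_{\mathrm{cap}}\ge d_{\mathrm{safe}}>0$, the first condition holds on the whole safe set. So $\partial h^j/\partial x$, $\partial h^j/\partial y$ and $\partial h^j/\partial \theta$ exist, and the coefficient in~\eqref{eq:hdot_v} is a finite real number.

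Next I plug in $v=0$. The left-hand side becomes $\frac{\partial h^j}{\partial v}(\mathbf{x})\cdot 0=0$. Because $\alpha$ is extended class-$\mathcal{K}_\infty$, it satisfies $\alpha(0)=0$ and is increasing. Hence $h^j(\mathbf{x})\ge 0$ gives $\alpha(h^j(\mathbf{x}))\ge 0$, so $-\alpha(h^j(\mathbf{x}))\le 0$. The inequality $0\ge -\alpha(h^j(\mathbf{x}))$ therefore holds, and $v=0\in K_{\mathrm{cbf}}(\mathbf{x})$.

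For more than a bare existence statement, I would add a short case split on the sign of the coefficient. If $\frac{\partial h^j}{\partial v}\ge 0$, then every $v\ge 0$ is admissible. If $\frac{\partial h^j}{\partial v}<0$, the admissible set is the interval $[0,\ \alpha(h^j)/|\partial h^j/\partial v|]$. This interval is nonempty because its upper endpoint is nonnegative. It also shows that projecting the nominal speed onto the admissible set is a closed-form clipping operation.

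The step most likely to cause trouble is the non-unique closest-point case, such as parallel segments, where Proposition~\ref{prop:smooth} does not give differentiability. I would handle it in one of two ways. The first is to restrict the statement to the generic case assumed in Definition~\ref{def:vel_cbf}, where continuous differentiability on $\mathcal{C}^j$ is part of the hypothesis. The second is to note that the argument with $v=0$ holds for any Danskin directional derivative or any element of the Clarke generalized gradient: every such derivative is still linear in $v$, so it vanishes at $v=0$ regardless of the coefficient.
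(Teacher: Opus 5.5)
Your proof is correct and follows the paper's approach: both use the fact that $\dot h^j$ is linear in $v$ under fixed steering, so $v=0$ gives $\dot h^j = 0 \ge -\alpha(h^j)$, because $\alpha(h^j)\ge 0$ whenever $h^j\ge 0$. The paper splits into cases on the sign of $\partial h^j/\partial v$, whereas you observe directly that $v=0$ works in either case; your points that $d_{\mathrm{cap}}\ge d_{\mathrm{safe}}>0$ on the safe set and about non-unique closest points only make the well-definedness step more careful.
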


\begin{proof}
By Proposition~\ref{prop:smooth}, $h^j$ is continuously differentiable on the domain where the closest-point pair is unique, hence $\dot{h}^j$ is well-defined. Under fixed steering, $\dot{h}^j = (\partial h^j/\partial v)\,v$ via~\eqref{eq:hdot_v}. Two cases arise:
\emph{(i)} If $\partial h^j/\partial v \geq 0$, any $v\geq 0$ satisfies the constraint.
\emph{(ii)} If $\partial h^j/\partial v < 0$, choosing $v=0$ yields $\dot{h}^j=0\geq -\alpha(h^j)$ since $\alpha(h^j)\geq 0$ for $h^j\geq 0$.
Thus $K_{\mathrm{cbf}}(\mathbf{x})$ is nonempty.
\end{proof}

Proposition~\ref{prop:smooth} and Theorem~\ref{thm:cbf} together establish that the capsule barrier $h^j$ in~\eqref{eq:capsule_cbf} is a CBF for the induced fixed-steering rollout dynamics in the sense of Definition~\ref{def:vel_cbf}.

\begin{corollary}[Forward Invariance]
\label{cor:invariance}
If $v(\mathbf{x})$ is selected such that $v(\mathbf{x}) \in K_{\mathrm{cbf}}(\mathbf{x})$ for all $\mathbf{x}\in\mathcal{C}^j$ (e.g., by solving~\eqref{eq:v_cbf}), then $\mathcal{C}^j$ is forward invariant under the resulting closed-loop rollout dynamics (Section~III-B).
\end{corollary}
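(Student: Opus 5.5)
The plan is to reduce the corollary to the standard Nagumo/comparison-lemma argument for forward invariance recalled in Section~III-B, applied to the fixed-steering rollout dynamics of Definition~\ref{def:vel_cbf}. Along any closed-loop trajectory $\mathbf{x}(t)$ starting in $\mathcal{C}^j$, I would set $\eta(t)\triangleq h^j(\mathbf{x}(t))$. The goal is to show $\eta(t)\ge 0$ for all $t$ in the interval of existence.

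\textbf{Step 1: well-posedness.} By Proposition~\ref{prop:smooth}, $h^j$ is $C^1$ wherever the closest-point pair is unique. Hence, by \eqref{eq:hdot_v}, $\dot\eta(t)=\frac{\partial h^j}{\partial v}(\mathbf{x}(t))\,v(\mathbf{x}(t))$. I would additionally assume, as Section~III-B does for the CBF-QP, that the selected speed law $v(\cdot)$ is locally Lipschitz. This holds, for instance, for the closed-form projection solving~\eqref{eq:v_cbf} when $\partial h^j/\partial v$ stays bounded away from zero or is handled by a smooth saturation. Together with the smooth vector field $f$ in~\eqref{eq:bicycle} (with $\delta$ fixed), this gives existence and uniqueness of closed-loop solutions. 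Theorem~\ref{thm:cbf} guarantees that the selection $v(\mathbf{x})\in K_{\mathrm{cbf}}(\mathbf{x})$ exists at every point of $\mathcal{C}^j$, so the hypothesis is not vacuous.

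\textbf{Step 2: the comparison argument.} While $\mathbf{x}(t)\in\mathcal{C}^j$, membership $v\in K_{\mathrm{cbf}}$ yields the differential inequality $\dot\eta\ge-\alpha(\eta)$. I would argue by contradiction. Suppose $t_1=\inf\{t: \eta(t)<0\}$ is finite. Continuity gives $\eta(t_1)=0$. The inequality holds on $[0,t_1]$, and since $\alpha$ is an extended class-$\mathcal{K}_\infty$ function with $\alpha(0)=0$, it is locally Lipschitz after the usual assumption. The comparison lemma against the solution of $\dot y=-\alpha(y)$, $y(t_1)=0$, which is $y\equiv 0$, then shows $\eta$ cannot become negative immediately after $t_1$. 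This is Nagumo's condition: on the boundary, $\dot\eta\ge-\alpha(0)=0$.

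A cleaner alternative is to extend the controller: if the selection can be made on a neighborhood of $\mathcal{C}^j$ (using $v=0$ whenever $\partial h^j/\partial v<0$, exactly as in case~(ii) of Theorem~\ref{thm:cbf}), then the inequality holds on an open set containing $\mathcal{C}^j$. The comparison lemma then directly gives $\eta(t)\ge y(t)\ge 0$.

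\textbf{Main obstacle.} The hardest step is the regularity gap. Proposition~\ref{prop:smooth} only gives differentiability when the closest-point pair is unique, which fails when the segments are parallel. In addition, the admissible-set selection may be only continuous. I would handle the first issue by noting that $d_{\mathrm{cap}}$ is a minimum of smooth functions and hence locally Lipschitz. Its upper Dini derivative along trajectories is the minimum of $\nabla_{\mathbf{x}}\phi\cdot\dot{\mathbf{x}}$ over minimizers (Danskin), so the comparison argument goes through with Dini derivatives provided the constraint is imposed for every minimizing pair. I would state this explicitly as a standing assumption if it is not automatically ensured. Finally, the discrete-time implementation across rollout steps $k$ is outside the corollary's continuous-time claim, and I would remark that it is not covered.
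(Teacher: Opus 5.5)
Your reduction has the same shape as the paper's argument. The paper gives no separate proof: it combines Proposition~\ref{prop:smooth} and Theorem~\ref{thm:cbf} (so $h^j$ is a CBF in the sense of Definition~\ref{def:vel_cbf}) with the forward-invariance fact quoted in Section~III-B. The gap is in your Step~2, where you re-derive that fact.

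Comparing against $y$ with $y(t_1)=0$ tells you nothing about $t>t_1$. There $\eta$ takes negative values (that is your contradiction hypothesis), and $v(\mathbf{x})\in K_{\mathrm{cbf}}(\mathbf{x})$ is only assumed on $\mathcal{C}^j$, so no differential inequality is available on that interval. Used correctly, on $[0,t_1]$ with $y(0)=\eta(0)$, the comparison lemma does settle starts with $\eta(0)>0$. For locally Lipschitz $\alpha$, as you assume, $y$ never reaches $0$, so $\eta(t_1)\ge y(t_1)>0$, a contradiction. It cannot settle starts on $\partial\mathcal{C}^j$, where $t_1=0$ and the only information is $\dot\eta(0)\ge 0$. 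The example $h(x)=-x^2$, $\dot x=1$ satisfies $\dot h\ge-\alpha(h)$ on $\mathcal{C}=\{0\}$, yet the trajectory leaves $\mathcal{C}$ at once.

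Your ``cleaner alternative'' does not rescue this. For $h^j<0$ one has $-\alpha(h^j)>0$, so $v=0$ gives $\dot h^j=0<-\alpha(h^j)$. Moreover, at exterior points near any boundary point with $\partial h^j/\partial v<0$, no $v\ge 0$ is admissible at all. So the inequality cannot be imposed on any open set containing $\mathcal{C}^j$, and the comparison-lemma route, which needs exactly that, is closed for this barrier. The Section~III-B fact as quoted, with the inequality only on $\mathcal{S}$, needs the extra ingredient below as well.

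The missing ingredient is a nondegenerate boundary together with Nagumo's theorem. Take your Step~1 assumption: $v$ locally Lipschitz and extended to a neighborhood, so closed-loop solutions are unique. Since $\mathcal{C}^j$ is closed, it is forward invariant iff at every boundary point the closed-loop field lies in the Bouligand tangent cone of $\mathcal{C}^j$. That cone is the half-space $\{z:\nabla h^j(\mathbf{x})^\top z\ge 0\}$ whenever $\nabla h^j(\mathbf{x})\neq 0$.

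Nondegeneracy comes for free from Proposition~\ref{prop:smooth}. The ego segment translates rigidly with $(x,y)$, so $\partial S_1/\partial(x,y)=I_2$. Hence the $(x,y)$-block of $\nabla h^j$ is $\hat n^\top$, a unit vector, because $d_{\mathrm{cap}}=d_{\mathrm{safe}}>0$ on $\partial\mathcal{C}^j$. Then $\dot h^j\ge-\alpha(0)=0$ there puts the field in the cone, and Nagumo concludes. Equivalently, every boundary point is a limit of points with $h^j>0$, so your interior argument plus continuous dependence on initial conditions finishes.

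The same observation handles the parallel-segment points in your Dini-derivative discussion. All minimizing pairs share the same $\hat n$, so $(\hat n,0)$ is a direction of strict increase, and the tangent cone is still $\{z:(h^j)'(\mathbf{x};z)\ge 0\}$. The decisive step is therefore the nonvanishing of $\nabla h^j$ on $\partial\mathcal{C}^j$, which your argument never uses. It matters more than the regularity issues you single out as the main obstacle.
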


We therefore compute the minimally modified safe speed by solving
\begin{equation}
\begin{aligned}
    v_k^* = \arg\min_{v_k}\;& \|v_k - v_{\mathrm{nom},k}\|^2 \\
    \mathrm{s.t.}\quad& \frac{\partial h^j}{\partial v}\, v_k \geq -\alpha(h^j),\quad \forall\, j \in \mathcal{R},
\end{aligned}
    \label{eq:v_cbf}
\end{equation}
and recover the corresponding acceleration as $a_k^* = (v_k^* - v_k)/\Delta t$. This velocity-level implementation avoids a higher-order CBF (HOCBF)~\cite{xiao2021hocbf}, which would require differentiating $\dot{h}^j$ again and introducing additional class-$\mathcal{K}$ hyperparameters, often resulting in more sensitive behavior.

Finally, we propagate the ego state forward using the bicycle model~\eqref{eq:bicycle} with $(a_k^*,\delta_{\mathrm{nom},k})$ and repeat for $k=1,\ldots,K$. The resulting rollout $\hat{\boldsymbol{\tau}}_0^{*} = (\hat{\mathbf{x}}_1^*, \ldots, \hat{\mathbf{x}}_K^*)$ is (i) dynamically feasible by construction, (ii) safe with respect to all agents in $\mathcal{R}$ by enforcing the CBF constraint at every rollout step, and (iii) path-consistent since the spatial path is tracked through $\delta_{\mathrm{nom},k}$ while safety is achieved primarily through speed modulation. The full PC-CBF procedure is summarized in Algorithm~\ref{alg:pc_cbf}.

\begin{algorithm}[ht]
\caption{PC-CBF: Path-Consistent Capsule CBF Safety Filter}\label{alg:pc_cbf}\footnotesize
\KwIn{Planned trajectory $\hat{\boldsymbol{\tau}}_0^{(t)} = (\hat{\mathbf{x}}_1, \ldots, \hat{\mathbf{x}}_K)$, neighbor trajectories $\{\hat{\boldsymbol{\tau}}^j\}_{j \in \mathcal{R}}$}
\KwOut{Corrected trajectory $\hat{\boldsymbol{\tau}}_0^{*}$}
$\hat{\mathbf{x}}_1^* \leftarrow \hat{\mathbf{x}}_1$\;
\For{$k = 1,\, \ldots,\, K{-}1$}{
    $(a_{\mathrm{nom},k},\, \delta_{\mathrm{nom},k}) \leftarrow \mathrm{LQR}(\hat{\mathbf{x}}_k^*,\, (\hat{\mathbf{x}}_{k+1}, \ldots, \hat{\mathbf{x}}_K))$ \tcp*{track trajectory}
    $v_{\mathrm{nom},k} \leftarrow v_k + a_{\mathrm{nom},k}\,\Delta t$\;
    \For{$j \in \mathcal{R}$}{
        Compute $\frac{\partial h^j}{\partial v}$ via Eq.~\eqref{eq:hdot_v}\;
    }
    $v_k^* \leftarrow$ solve Eq.~\eqref{eq:v_cbf} \tcp*{safe speed}
    $a_k^* \leftarrow (v_k^* - v_k) / \Delta t$ \tcp*{recover acceleration}
    $\hat{\mathbf{x}}_{k+1}^* \leftarrow \hat{\mathbf{x}}_k^* + \Delta t\, f(\hat{\mathbf{x}}_k^*,\, a_k^*,\, \delta_{\mathrm{nom},k})$ \tcp*{propagate}
}
\Return $\hat{\boldsymbol{\tau}}_0^{*} = (\hat{\mathbf{x}}_1^*,\, \ldots,\, \hat{\mathbf{x}}_K^*)$\;
\end{algorithm}

\subsection{Integrating Certifiable Structure into Diffusion Planning}

Having established \emph{what} certifiable structure to incorporate (Q1), we now address \emph{how} to integrate it into the diffusion planner (Q2). A natural baseline is to apply PC-CBF as a post-hoc filter on the final denoised trajectory $\boldsymbol{\tau}_0$. 
However, such a one-shot correction is oblivious to the diffusion model’s internal generation: the planner has no opportunity to adapt, and the corrected output may be unlikely under the learned distribution.

Our key insight is to enforce the certifiable structure \emph{within} the denoising process by operating on the predicted clean trajectory estimate. At each denoising step $t$, the diffusion model produces a noisy trajectory $\boldsymbol{\tau}_t$, from which we extract the predicted clean trajectory $\hat{\boldsymbol{\tau}}_0^{(t)}$ via~\eqref{eq:clean_estimate}. This clean estimate is the model’s current best estimate of the final plan and is therefore a meaningful object on which to enforce rollout-time safety, unlike the noisy state $\boldsymbol{\tau}_t$ itself.

We apply PC-CBF to $\hat{\boldsymbol{\tau}}_0^{(t)}$ to obtain the corrected estimate $\hat{\boldsymbol{\tau}}_0^{*(t)}$, and then re-noise it back into the diffusion process:
\begin{equation}
    \boldsymbol{\tau}_{t-1} = \sqrt{\bar{\alpha}_{t-1}}\;\hat{\boldsymbol{\tau}}_0^{*(t)} + \sqrt{1 - \bar{\alpha}_{t-1}}\;\hat{\boldsymbol{\epsilon}}^{(t)} + \sigma_t \mathbf{z},
    \label{eq:corrected_denoise}
\end{equation}
where $\hat{\boldsymbol{\epsilon}}^{(t)} = \bigl(\boldsymbol{\tau}_t - \sqrt{\bar{\alpha}_t}\,\hat{\boldsymbol{\tau}}_0^{*(t)}\bigr) / \sqrt{1 - \bar{\alpha}_t}$ is the re-estimated noise consistent with the corrected clean estimate and $\mathbf{z} \sim \mathcal{N}(\mathbf{0}, \mathbf{I})$. Subsequent denoising steps then operate on a trajectory that already accounts for safety, enabling \textbf{co-adaptation}: the diffusion planner refines the corrected estimate back toward its learned distribution while preserving the certifiable structure. As $t \to 0$, $\hat{\boldsymbol{\tau}}_0^{(t)}$ sharpens and the required safety corrections diminish, converging to a trajectory that is both safe and consistent with the learned behavior.

\subsection{PC-Diffuser and Practical Considerations}
\label{sec:practical}

Enforcing the CBF constraint against all $N$ neighboring agents is often unnecessarily conservative and computationally expensive, since most nearby agents pose no collision risk (e.g., vehicles traveling in the opposite direction on separated lanes or parked vehicles on the side). Instead of relying on hand-crafted geometric heuristics, we leverage the diffusion planner's predicted neighbor trajectories to focus certification on a smaller set of safety-critical agents.

Concretely, at each denoising step $t$ we evaluate the minimum barrier value between the ego's predicted trajectory and each neighbor $j$:
\begin{equation}
    h^j_{\min} = \min_{k \in \{1,\ldots,K\}} h^j\!\left(\hat{\mathbf{x}}_k,\;\hat{\mathbf{x}}_k^j\right).
\end{equation}
Agent $j$ is added to the safety-critical set $\mathcal{R}$ whenever $h^j_{\min} \leq \eta$ for a user-specified threshold $\eta \geq 0$. The set $\mathcal{R}$ is accumulated across denoising steps: once an agent is flagged as critical at step $t$, it remains in $\mathcal{R}$ for all subsequent steps. This monotonic accumulation prevents oscillatory inclusion and exclusion as $\hat{\boldsymbol{\tau}}_0^{(t)}$ sharpens during denoising. 

Our full framework, summarized in Algorithm~\ref{alg:pc_cbf_denoise}, proceeds as follows at each denoising step $t$:
\begin{enumerate}
    \item \textbf{Predict.} Extract the clean trajectory estimate $\hat{\boldsymbol{\tau}}_0^{(t)}$ from the noisy state via~\eqref{eq:clean_estimate}.
    \item \textbf{Filter.} Update the safety-critical agent set $\mathcal{R}$ based on predicted proximity.
    \item \textbf{Correct.} Apply PC-CBF (Section~\ref{sec:pc_cbf}) to obtain the safe, dynamically feasible, path-consistent trajectory $\hat{\boldsymbol{\tau}}_0^{*(t)}$.
    \item \textbf{Re-noise.} Inject the corrected estimate back into the diffusion process via~\eqref{eq:corrected_denoise}.
\end{enumerate}
After all denoising steps complete, the first action of $\boldsymbol{\tau}_0^*$ is executed and the planning loop resets.

\begin{algorithm}[ht] \footnotesize
\caption{PC-Diffuser: Denoising with Path-Consistent CBF}\label{alg:pc_cbf_denoise}
\KwIn{Joint noisy trajectory $\boldsymbol{\tau}_T$ including ego and neighbors}
\KwOut{Safe denoised trajectory $\boldsymbol{\tau}_0^*$}
$\mathcal{R} \leftarrow \emptyset$\;
\For{$t = T,\, T{-}1,\, \ldots,\, 1$}{
    $\hat{\boldsymbol{\tau}}_0^{(t)} \leftarrow \text{Eq.~\eqref{eq:clean_estimate}}$ \tcp*{predict clean estimate}
    $\mathcal{R} \leftarrow \mathcal{R} \cup \mathrm{ProximityFilter}(\hat{\boldsymbol{\tau}}_0^{(t)})$ \tcp*{accumulate critical agents}
    $\hat{\boldsymbol{\tau}}_0^* \leftarrow \text{PC-CBF}(\hat{\boldsymbol{\tau}}_0^{(t)},\, \{\hat{\boldsymbol{\tau}}^j\}_{j \in \mathcal{R}})$ \tcp*{Alg.~\ref{alg:pc_cbf}}
    $\hat{\boldsymbol{\epsilon}} \leftarrow \bigl(\boldsymbol{\tau}_t - \sqrt{\bar{\alpha}_t}\,\hat{\boldsymbol{\tau}}_0^*\bigr) / \sqrt{1 - \bar{\alpha}_t}$ \tcp*{re-estimate noise}
    $\boldsymbol{\tau}_{t-1} \leftarrow \sqrt{\bar{\alpha}_{t-1}}\,\hat{\boldsymbol{\tau}}_0^* + \sqrt{1-\bar{\alpha}_{t-1}}\,\hat{\boldsymbol{\epsilon}} + \sigma_t \mathbf{z}$ \tcp*{diffusion step}
}
\Return $\boldsymbol{\tau}_0^*$\;
\end{algorithm}

\section{Evaluation}

We aim to validate two hypotheses: (H1: the structure to enforce)~jointly enforcing safety, dynamic feasibility, and path-consistency reduces collisions while maintaining driving quality, and (H2: the way to integrate)~integrating corrections iteratively within denoising, instead of just applying once in a post-hoc manner, keeps the corrected trajectory closer to the learned distribution, yielding safer long-horizon behavior. 
To evaluate these claims, we test on the nuPlan closed-loop benchmark using DiffusionPlanner~\cite{zheng2025diffusionplanner} as our base model, one of the strongest-performing learning-based models that jointly forecast neighbors' trajectories, which we can leverage to construct our certifiable structure. 
For baselines, we compare against popular safety augmentation strategies: Classifier Guidance~\cite{zheng2025diffusionplanner}, diffusion-time barrier constraints~\cite{xiao2023safediffuser}, and optimization-based safety filters~\cite{zeng2021mpccbf}, which are general safety augmentation methods that do not leverage nuPlan's neighbor policy (IDM~\cite{treiber2000idm}) as a prior or simulate multiple proposals on-the-fly to better reflect real-life performance. 

\subsection{Dataset and Metrics}

The nuPlan benchmark~\cite{caesar2021nuplan} provides approximately 1,300 hours of expert driving data collected across four cities (Las Vegas, Boston, Pittsburgh, Singapore) with auto-labeled object tracks and traffic light states. 
Our evaluation uses nuPlan's closed-loop reactive simulation, which represents logged traffic with a bird's-eye-view (BEV) representation and resimulates agent behavior using the Intelligent Driver Model (IDM)~\cite{treiber2000idm} for longitudinal car-following.

We evaluate on two standard splits:
\emph{Val14}, which contains up to 100 scenarios per type across 14 scenario types and \emph{Test14-hard}, a curated 280-scenario subset in which rule-defined planners struggle.

To further isolate safety-critical performance, we extract \emph{all} 68 scenarios from Val14 in which the base model DiffusionPlanner~\cite{zheng2025diffusionplanner} collides (100\% collision rate by construction), representing roughly 7\% of Val14 and constituting its most challenging subset. We refer to this subset as the \emph{all-collision challenge set}.


We report two metrics: the \emph{collision rate} and nuPlan's built-in \emph{composite score}~\cite{caesar2021nuplan}.
\begin{equation}
    s = \prod_{i \in \mathcal{M}} m_i \;\cdot\; \sum_{j \in \mathcal{A}} w_j \, a_j,
    \label{eq:nuplan_score}
\end{equation}
where $m_i \in \{0, 0.5, 1\}$ are hard multiplier metrics (at-fault collision, drivable area compliance, progress, driving direction) that zero or halve the score upon violation, and $a_j$ are weighted soft metrics including time-to-collision maintenance ($w=5$), route progress ($w=5$), speed limit compliance ($w=4$), and comfort ($w=2$).

\subsection{Quantitative Analysis Against Baselines}

\begin{table}[ht]
    \centering
    \caption{Closed-loop results on the all-collision challenge set. Collision rate (\%) and nuPlan composite score (0--1) are reported.} 
    \label{tab:collision_subset}
    \adjustbox{max width=\columnwidth}{
        \begin{tabular}{l|c|c|c|c|c}
        \hline
            Method & \makecell{Collision\\Rate} & \makecell{Composite\\Score} & Safety & \makecell{Dynamic\\Feasibility} & \makecell{Path\\Consistency}\\
        \hline
        Vanilla (DiffusionPlanner) & 100\%          & 0.00            & \xmark & \xmark & \xmark\\
        Classifier Guidance        & 74.29\%        & 0.15            & \xmark & \xmark & \xmark \\
        Robust-Safe Diffuser       & 81.43\%        & 0.10            & \cmark & \xmark & \xmark \\
        Relaxed-Safe Diffuser      & 78.57\%        & 0.15            & \cmark & \xmark & \xmark \\
        Time-Varying-Safe Diffuser & 88.57\%        & 0.07            & \cmark & \xmark & \xmark \\
        MPC-CBF                    & 79.29\%        & 0.10            & \cmark & \cmark & \xmark \\
        PC-Diffuser (ours)         & \textbf{10.29}\% & \textbf{0.59} & \cmark & \cmark & \cmark \\
    \end{tabular}
    }
\end{table}


We compare PC-Diffuser against three safety augmentation families applied to the same base planner: Classifier Guidance, SafeDiffuser, and MPC-CBF. Classifier Guidance~\cite{zheng2025diffusionplanner} steers the denoising process via gradients of a hand-crafted distance classifier. SafeDiffuser~\cite{xiao2023safediffuser} incorporates barrier constraints on the denoising process with three barrier-value scheduling variants (Robust-Safe Diffuser, Relaxed-Safe Diffuser, Time-Varying-Safe Diffuser) to mitigate a \emph{local trap problem} (denoising process cannot progress without violating CBF constraint). Lastly, MPC-CBF~\cite{zeng2021mpccbf} embeds CBF constraints into the receding-horizon optimization, aiming to satisfy the CBF constraint throughout the receding horizon. PC-Diffuser reduces the collision rate from 100\% to 10.29\% on the all-collision challenge set, compared to 74--89\% for all baselines, while achieving the highest composite score (0.59 vs.\ $\leq$0.15).



We further evaluate on the full Val14 and Test14-hard splits to verify that safety does not degrade overall driving quality. Despite the additional constraints from our certifiable structure, PC-Diffuser improves the base model's composite score from \emph{0.83} to \emph{0.88} on Val14 and from \emph{0.69} to \emph{0.78} on Test14-hard, indicating that the certifiable structure enhances safety without sacrificing driving quality.


The significant improvement in both collision rate and composite score on the all-collision challenge set (Table~\ref{tab:collision_subset}) while preserving strong driving performance on full data splits demonstrates the effectiveness of our certifiable structure and PC-Diffuser framework supporting H1 and H2.



\begin{figure*}[ht]
    \centering
    \begin{subfigure}[t]{0.19\textwidth}
        \centering
        \includegraphics[trim=150 225 225 225, clip, width=\textwidth]{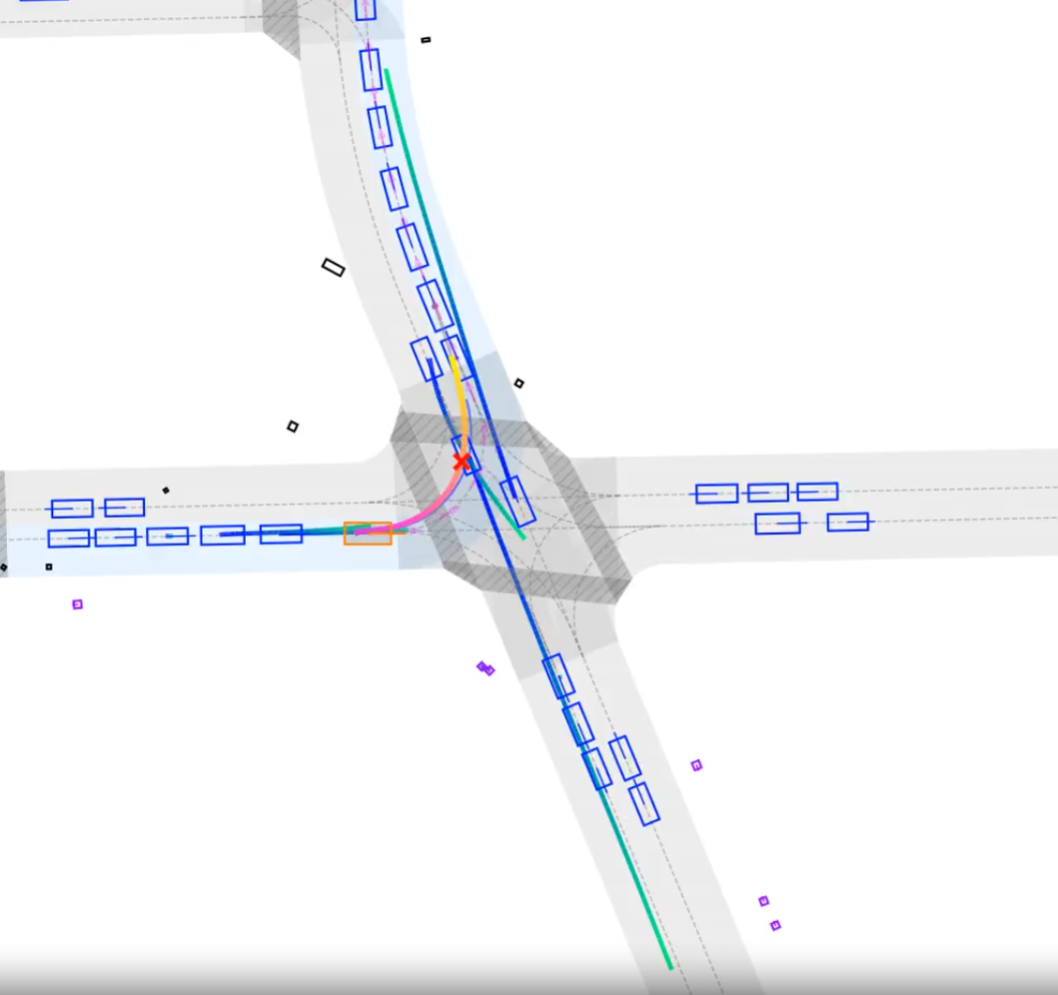}
        \caption{Diffuser (vanilla)}
        \label{fig:qual_vanilla}
    \end{subfigure}
    \hfill
    \begin{subfigure}[t]{0.19\textwidth}
        \centering
        \includegraphics[trim=150 225 225 225, clip, width=\textwidth]{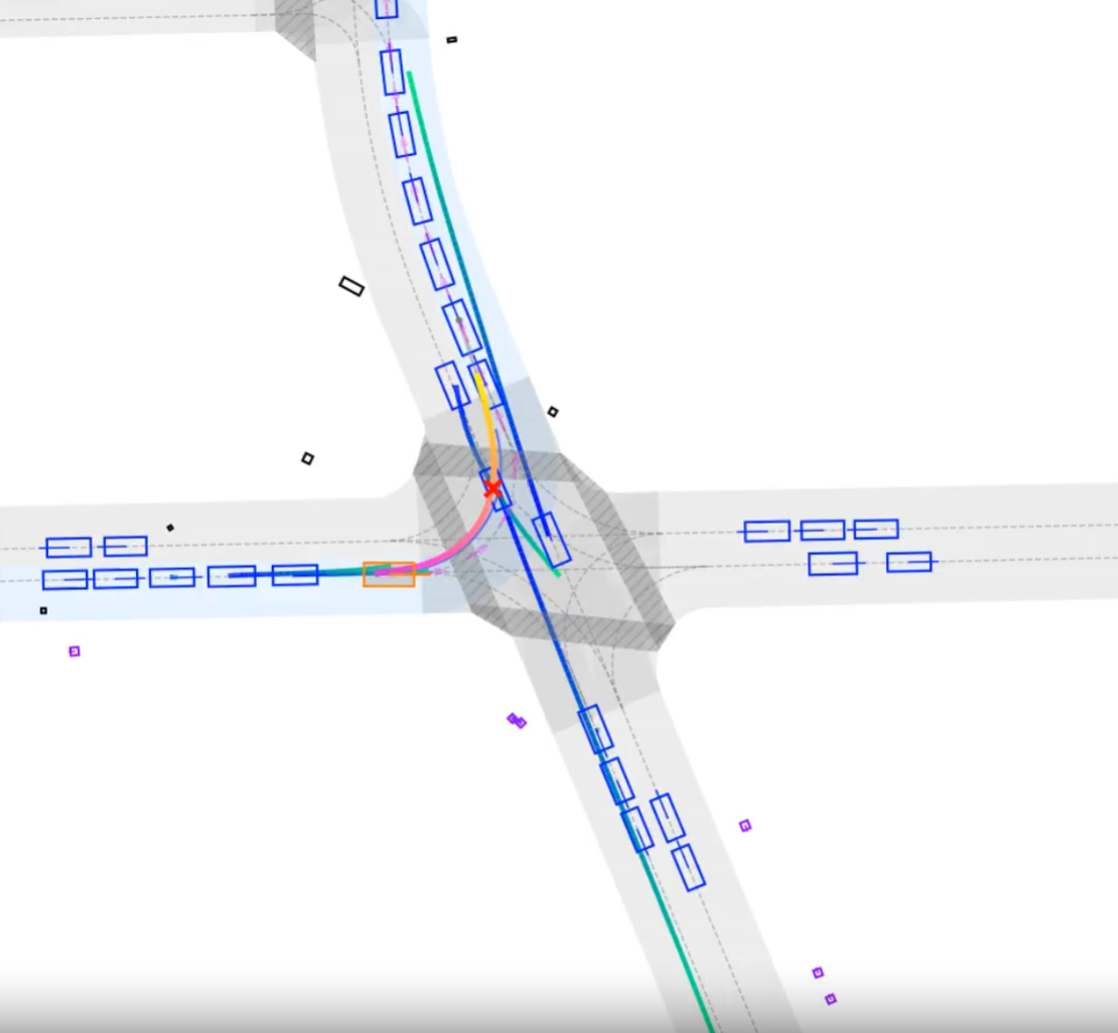}
        \caption{Classifier Guidance}
        \label{fig:qual_classifier}
    \end{subfigure}
    \hfill
    \begin{subfigure}[t]{0.19\textwidth}
        \centering
        \includegraphics[trim=200 275 275 275, clip, width=\textwidth]{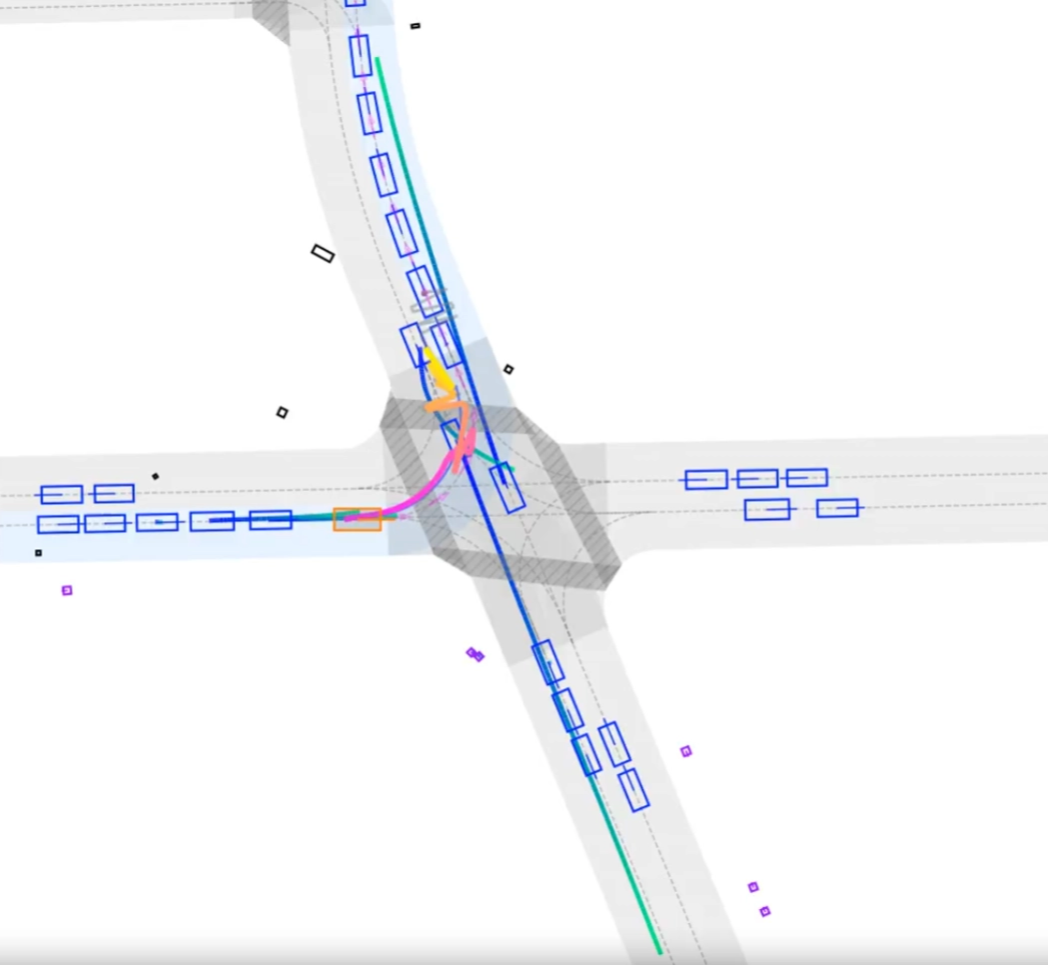}
        \caption{Relaxed-Safe Diffuser}
        \label{fig:qual_safediffuser}
    \end{subfigure}
    \hfill
    \begin{subfigure}[t]{0.19\textwidth}
        \centering
        \includegraphics[trim=150 225 225 225, clip, width=\textwidth]{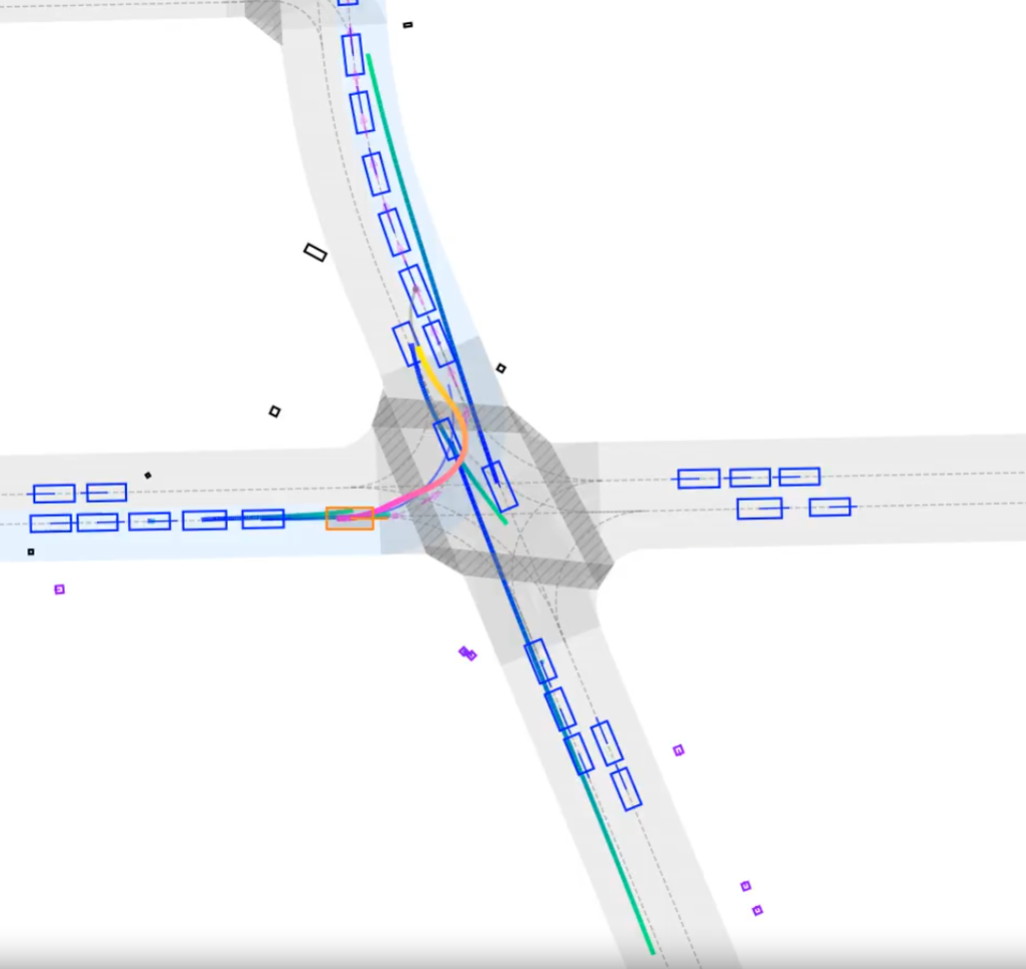}
        \caption{MPC-CBF}
        \label{fig:qual_mpc}
    \end{subfigure}
    \hfill
    \begin{subfigure}[t]{0.19\textwidth}
        \centering
        \includegraphics[trim=150 225 225 225, clip, width=\textwidth]{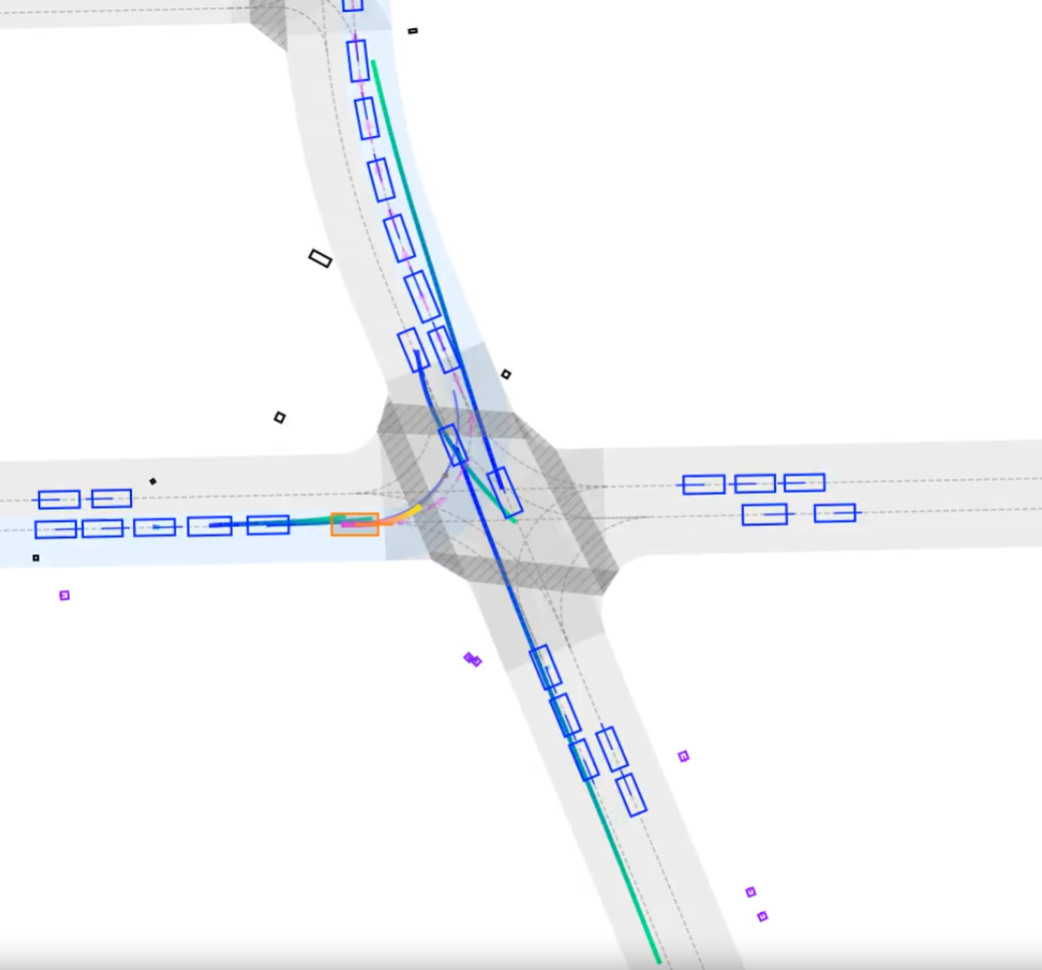}
        \caption{PC-Diffuser (ours)}
        \label{fig:qual_ours}
    \end{subfigure}
    \caption{Qualitative comparison on a collision critical intersection scenario. The ego vehicle (orange box) approaches from the left. Planned trajectory for ego and neighbors are depicted in bright gradient and blue-green gradient, respectively. The first collision is marked with a red x. The vanilla Diffuser~(a) and Classifier Guidance~(b) collide with oncoming traffic. SafeDiffuser~(c) produces a safe but dynamically infeasible trajectory. MPC-CBF~(d) produces a safe and feasible trajectory but deviates into the oncoming lane. PC-Diffuser~(e) yields to the oncoming traffic before safely making a left turn while preserving a lane-consistent trajectory. More qualitative comparisons are available on the project website.}
    \label{fig:qualitative}
\end{figure*}

\subsection{Qualitative Analysis Against Baselines}
Figure~\ref{fig:qualitative} provides a qualitative comparison on a safety-critical intersection scenario where the ego vehicle has to make a left turn into a congested lane while avoiding two oncoming vehicles. In this scenario, due to the oncoming traffic and high congestion, the ideal maneuver is to yield and slowly make a left turn when available.

As shown in Fig. \ref{fig:qualitative}(a), vanilla DiffusionPlanner~\cite{zheng2025diffusionplanner}, trained to predict the most likely trajectory based on its training data, produces a natural and smooth left-turn trajectory. 
However, without a safety certificate, it fails to avoid a collision with the oncoming vehicle. Similarly, Classifier Guidance~\cite{zheng2025diffusionplanner} (via distance energy-based classifier), as shown in Fig. \ref{fig:qualitative}(b),  also produces a smooth, high-quality trajectory but without the safety certificate, it also fails to avoid collision. On the other hand, SafeDiffuser, as shown in Fig. \ref{fig:qualitative}(c), provides a safety certificate but without the dynamic feasibility, it produces a physically meaningless trajectory as the denoising process was interfered by CBF constraint on the diffusion time. The Relaxed-Safe Diffuser variant was introduced to tackle this problem; however, it is not foolproof and still produces a physically infeasible trajectory. MPC-CBF, as shown in Fig. \ref{fig:qualitative}(d), provides both safety certificate and dynamic feasibility through CBF and model dynamics constraints. Yet, as it is geometrically unconstrained, it ignores the intended behavior of the diffusion planner and swerves into the oncoming lane, which is highly undesirable. In contrast to these baseline methods, PC-Diffuser, as shown in Fig. \ref{fig:qualitative}(e), simultaneously satisfies safety, dynamic feasibility, and path-consistency, and yields to the oncoming traffic before initiating a left turn while remaining lane-consistent.


The qualitative failure analysis of the baseline methods and the safe maneuver of PC-Diffuser at the safety-critical intersection demonstrates the importance of jointly enforcing safety, dynamic feasibility, and path-consistency, supporting H1.



\begin{table}[!t]
    \centering
    \caption{Ablation study on the all-collision challenge set. Each row excludes one component from PC-Diffuser. None (PC-Diffuser) shows the same score as in Table~\ref{tab:collision_subset}.}
    \label{tab:ablation}
    \adjustbox{max width=\columnwidth}{
    \begin{tabular}{l|c|c}
        \hline
        Excluded Component              & Collision Rate & Composite Score \\
        \hline
        Iterative safeguard             & 16.91\%  & 0.47 \\
        Selective filtering             & 11.76\%  & 0.47 \\
        Dynamic feasibility             & 21.32\%  & 0.30 \\
        None (full PC-Diffuser)         & \textbf{10.29}\% & \textbf{0.59} \\
    \end{tabular}
    }
\end{table}

\subsection{Ablation Study}

To understand the contribution of each component of PC-Diffuser, we systematically remove one component at a time and re-evaluate on the all-collision challenge set (Table~\ref{tab:ablation}).

\begin{figure}[!t]
    \centering
    \includegraphics[width=\columnwidth]{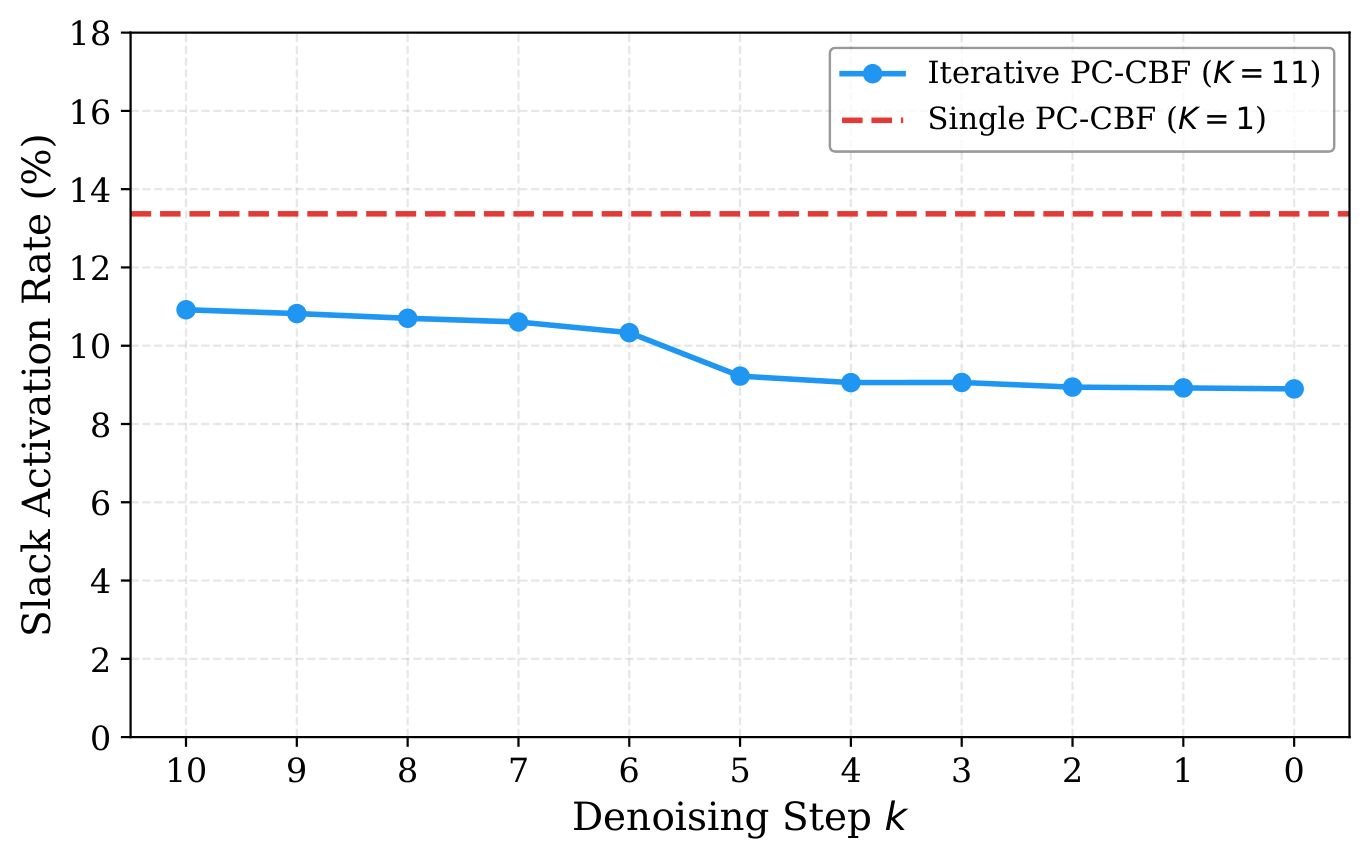}
    \caption{Average slack activation rate (\%) in the CBF-QP across denoising steps. Iterative PC-CBF shows monotonically decreasing corrections as denoising progresses, indicating convergence to a safe trajectory. Single-step PC-CBF (dashed line) applies a larger correction at the final step with higher slack activation rate due to the absence of iterative refinement.}
    \label{fig:slack_counts}
\end{figure}


\textbf{Iterative safeguard.} Applying PC-CBF only at the final denoising step, rather than throughout the process, increases the collision rate by ${\sim}6\%$ and lowers the composite score by 0.12. To understand why, we examine how often the slack variables in the CBF-QP are activated across denoising steps (Figure~\ref{fig:slack_counts}). With iterative correction, slack activation decreases monotonically as denoising progresses, producing trajectories that increasingly satisfy safety constraints on their own. By contrast, single-step correction sees no such convergence and must apply a larger one-shot adjustment at the end. The benefit compounds over simulation time: the gap between slack activation during the first denoising step and post-hoc correction is $\sim$2.5\% (Figure~\ref{fig:slack_counts}) while when we only consider slack violation rate of the first simulation step, the gap is much smaller at 0.65\%. An improvement in the slack activation rate over long-term indicates that iterative correction steers the diffusion model toward trajectories that yield safer long-horizon behavior by jointly searching for a safe trajectory together with diffusion planner, supporting H2. Regarding latency, a single post-hoc fix incurs 1.5x the vanilla latency whereas iterative safeguard incurs 5x (2 fps).

\textbf{Selective filtering.} Not filtering out benign agents degrades the composite score by 0.12 as expected, since our capsule CBF constraints can hinder the vehicle's progress by promoting unnecessarily conservative behavior (e.g. vehicles in the oncoming lane). Interestingly removing the filtering provides a modest improvement on collision rate ($+1.5\%$). This is mainly due to PC-Diffuser occasionally finding a safe-set which without the selective filtering would've been considered unsafe due to overly conservative constraint.

Among all components, \textbf{Dynamic feasibility} has the largest impact. To isolate this component while maintaining the rest of PC-Diffuser, we replace the LQR path-following controller with an arc-reparameterization that still satisfies CBF constraints and path-consistency. Removing dynamic feasibility raises the collision rate by ${\sim}11\%$ and lowers the composite score by 0.29, indicating that kinematic grounding is crucial for both safety and driving quality, which further supports H1. We note that despite the lack of dynamic feasibility, the arc-reparameterization method outperforms all baseline methods (Table~\ref{tab:collision_subset}) as path-consistency still respects the learned dynamics from the diffusion model, mitigating much of the adverse effects of kinematic infeasibility.

\section{Conclusion \& Limitations}

We presented PC-Diffuser, a safety augmentation framework that integrates a certifiable structure into diffusion-based planning through path-consistent barrier functions.
We introduced a novel capsule distance CBF safety filter, a framework to enforce dynamic feasibility on trajectory-space diffusion models, a way to reduce deviation from learned distribution via geometric consistency, and an innovative way to integrate these properties during the denoising process (generation) rather than as a post-hoc correction.

Despite these efforts, our approach has several limitations. 
First, although the iterative integration of PC-CBF allows for neighbors' trajectory forecast to adapt according to PC-Diffuser's correction on ego's plan, our framework still remains reactive and is limited in anticipating reactions from human drivers, which could potentially be improved via active uncertainty mitigation methods~\cite{wang2023active,yiwei2026icra}. Additionally, probabilistic CBF ~\cite{lyu2021probabilistic} and risk measures such as CVaR~\cite{lyu2023risk} could improve robustness to uncertainty in predicted future states of neighboring vehicles.

Second, our evaluation relies on nuPlan's IDM-based reactive simulation, where IDM agents can sometimes unnaturally swarm the ego vehicle from all directions, causing unavoidable collisions and yielding a $\sim$10\% collision rate with PC-Diffuser on the all-collision challenge set.
Benchmarking on a higher-fidelity simulator with naturalistic neighbor vehicle policy is left to future work but constitutes a substantial research challenge in its own right.

\addtolength{\textheight}{-12cm}   








\bibliographystyle{ieeetr}
\bibliography{references}

\end{document}